\documentclass[10pt]{article}

\usepackage{graphicx}
\usepackage{color}

\usepackage[authoryear]{natbib}

\usepackage{amsmath,amssymb,bm}
\usepackage{mathtools}
\mathtoolsset{showonlyrefs=true}

\newcommand{\R}{\mathbb{R}}

\newcommand{\PP}{\mathbb{P}}
\DeclareMathOperator*{\argmax}{argmax}

\DeclareMathOperator*{\argsort}{arg\,sort}

\newcommand{\gA}{\mathcal{A}} 
\newcommand{\gN}{\mathcal{N}} 
\newcommand{\gM}{\mathcal{M}} 
\newcommand{\gQ}{\mathcal{Q}} 
\newcommand{\gZ}{\mathcal{Z}} 

\newcommand{\vY}{\bm{Y}}            
\newcommand{\vy}{\bm{y}}            

\newcommand{\vmu}{\bm{\mu}}
\newcommand{\veps}{\bm{\varepsilon}}

\newcommand{\veta}{\bm{\eta}}

\newcommand{\va}{\bm{a}}
\newcommand{\vb}{\bm{b}}
\newcommand{\vbeta}{\bm{\beta}}

\newcommand{\mX}{\bm{X}}           
\newcommand{\mI}{\bm{I}}
\newcommand{\mSigma}{\bm{\Sigma}}

\newcommand{\pselective}{p_{\mathrm{selective}}}
\newcommand{\pnaive}{p_{\mathrm{naive}}}
\newcommand{\num}{\mathrm{num}}
\newcommand{\den}{\mathrm{den}}
\newcommand{\abs}[1]{\left\lvert #1 \right\rvert}
\newcommand{\norm}[1]{\left\lVert #1 \right\rVert}

\usepackage{amsthm}
\theoremstyle{plain}
\newtheorem{theorem}{Theorem}

\newtheorem{lemma}{Lemma}
\theoremstyle{definition}

\newtheorem{condition}{Condition}

\usepackage[hyphens]{url}
\usepackage{booktabs}
\usepackage{caption}
\usepackage{subcaption}
\usepackage{algorithm}
\usepackage{algorithmic}

\usepackage{float}
\usepackage{listings}
\floatstyle{ruled}
\newfloat{listing}{tb}{lst}
\floatname{listing}{Listing}
\usepackage{etoolbox}
\AtBeginEnvironment{listing}{}

\usepackage{tikz}
\usetikzlibrary{arrows.meta}

\usepackage[hidelinks]{hyperref}

\title{\vspace{-25mm}Automatic Statistical Test \\
for Rationally Expressible Algorithms \\ by Selective Inference, \\with Applications to Feature Selection}

\date{\today}

\makeatletter
\def\@fnsymbol#1{\ensuremath{\ifcase#1\or
{1}\or
{2}\or
{3}\or
{4}\or
{5}\or
{\dagger}\or
\else\@ctrerr\fi}}
\makeatother

\author{
Teruyuki Katsuoka\thanks{Nagoya University} ,
Tomohiro Shiraishi\footnotemark[1] \thanks{RIKEN} ,\\
Shuichi Nishino\footnotemark[1] \footnotemark[2] ,
Ichiro Takeuchi\footnotemark[1] \footnotemark[2] \thanks{Corresponding author. e-mail: takeuchi.ichiro.n6@f.mail.nagoya-u.ac.jp}
}
\begin{document}

\maketitle

\thispagestyle{empty}

\begin{abstract}
    \noindent
    {\linespread{1}\selectfont
    Selective inference (SI) provides statistically valid $p$-values for hypotheses selected by applying an algorithm to the data, correcting for the bias that arises when the same data are used both to select and to test a hypothesis.
Developing an SI procedure for a new algorithm, however, has required an expert to derive, and then implement, the \emph{selection event}, i.e., the conditions under which the hypothesis is selected.
Repeating this specialized effort for every new algorithm is why exact SI has so far been available for only a narrow class.
We propose \emph{AutoSI}, a framework that removes this barrier in two ways.
First, AutoSI constructs the selection event automatically from the algorithm's individual operations, so the user only writes the algorithm as ordinary NumPy-like code and derives nothing by hand.
Second, AutoSI broadens the class of selection events SI can handle: existing exact methods are limited to selection events characterized by linear or quadratic inequalities in the data, whereas AutoSI covers any algorithm expressible through rational functions of the data (ratios of polynomials).
We prove that the $p$-values computed by AutoSI are exactly valid in finite samples.
We demonstrate AutoSI on three feature-selection methods, each written in a few dozen lines of code.
One of these methods, the lasso with its tuning parameter selected by cross-validated $R^2$, cannot be handled within existing exact SI frameworks and is made possible by AutoSI.
Experiments on synthetic and real datasets show that the resulting $p$-values control the type I error rate (i.e., the false positive rate) at the nominal level while retaining high power.
\par}
\end{abstract}

\newpage
\section{Introduction}
\label{sec:introduction}
Modern data analysis increasingly relies on algorithms to search over many possible findings and select those that appear most promising.
Once such a data-driven finding is selected, a basic question is whether it reflects a genuine signal or merely an artifact of noise in the data.
A traditional statistical test cannot answer this question correctly, because the same data are used twice, first to choose the hypothesis and then to test it~\citep{fithian2014optimal,benjamini2020selective}.
When a hypothesis is singled out precisely because it appears promising in the data, testing it on those same data overstates its significance and yields far too many false positives.
Reusing the same data for both selection and testing---a practice commonly known as \emph{double dipping}---is a well-recognized source of irreproducible findings~\citep{benjamini2020selective}, and it is becoming more pressing as automated pipelines select features and models with little human supervision.

Selective inference (SI) is a statistical framework that provides valid inference even when the hypothesis is chosen by looking at the data~\citep{lee2016exact,tibshirani2016exact,fithian2014optimal,taylor2015statistical}.
Its key idea is to account for the selection explicitly.
Rather than pretending the hypothesis was fixed in advance, SI tests it under the condition that the algorithm would have selected it from the data.
Conditioning on this fact corrects for the bias introduced by the selection, so that the resulting $p$-value reliably controls the type I error rate.
First introduced for feature selection in linear models~\citep{lee2014exact,lee2016exact,tibshirani2016exact}, SI has since become a principled way to attach statistical guarantees to data-driven findings across many domains~\citep{gao2022selective,jewell2022testing,miwa2023valid,shiraishi2024statistical}.

However, developing an SI procedure for a new algorithm requires substantial manual effort and, until now, has been possible only for a restricted class of algorithms.
Conditioning on the selection requires an explicit description of the \emph{selection event}: the set of datasets that would have led the algorithm to the same selection, stated in a form a test can use.
Characterizing the selection event, and implementing the resulting inference, must be done anew for each algorithm, a non-trivial task that demands considerable statistical expertise.
Even then, existing methods succeed only when the selection event reduces to linear or, with further effort, quadratic inequalities in the data~\citep{lee2016exact,tibshirani2016exact,tsukurimichi2021conditional,duy2022more}.
Realistic algorithms readily exceed this limit: data-dependent comparisons involving products or ratios of intermediate quantities can induce polynomial inequalities of degree higher than two, which existing exact methods cannot handle.
Together, the manual effort and the restriction to linear or quadratic selection events have kept SI from being applied to many of the algorithms used in practice.

We introduce \emph{AutoSI}, a framework that derives the selection event automatically, so that the user only needs to write the algorithm as ordinary NumPy-like code.
The key observation is that the algorithm's output is the cumulative result of many small decisions made along the way (for example, which of two quantities is larger, which of several is the largest, or how a list of values is ordered).
Each such decision remains the same only over a certain range of the data.
For every data-dependent operation the algorithm performs (such as a comparison, an absolute value, a maximum, or a sort), AutoSI records the condition under which that operation's outcome would not change, accumulating these conditions as the algorithm runs.
Accumulated over one run, these conditions certify a range of data on which the output is provably unchanged; assembled across the data space, such ranges constitute exactly the selection event (the set of datasets on which the algorithm produces the same output).
The framework therefore obtains the selection event by running the algorithm itself, with no derivation by hand.

Beyond removing the manual effort, AutoSI also enlarges the class of algorithms to which SI applies, from those with linear or quadratic selection events to any algorithm expressible through rational functions of the data.
It represents every quantity an algorithm computes as a \emph{rational function} of the data (a ratio of two polynomials), so that the conditions it accumulates are polynomial inequalities, not just the linear or quadratic forms to which prior work is confined.
These inequalities are linear for the simplest algorithms but reach much higher degree for others.
A lasso tuned by a $3$-fold cross-validated $R^2$ score, for instance, yields inequalities of degree twelve (degree $4K$ for $K$ folds).
By solving these inequalities at any degree, AutoSI brings such procedures, which lie beyond the reach of every existing exact method, within the scope of valid SI.

Although AutoSI applies to a broad class of SI problems, we focus in this paper on feature selection in linear models, a canonical setting that has played a central role in the development of SI.
This setting provides a well-understood testbed for isolating AutoSI's two main advances: automatic construction of selection events and support for high-degree polynomial inequalities induced by rational computations.
We demonstrate these advances on three feature-selection methods of increasing complexity.
Marginal screening and the lasso lie within the scope of existing exact methods, whereas the lasso tuned by a cross-validated $R^2$ score does not.
The same automatic procedure handles all three.
Figure~\ref{fig:concept} provides an overview of how AutoSI turns ordinary array code into a selection event and, ultimately, a valid selective $p$-value.

\paragraph{Related work}
SI originated for feature selection in linear models, where the selection event is a set of linear inequalities in the data and inference is based on the resulting truncated normal distribution~\citep{lee2014exact,lee2016exact,tibshirani2016exact,fithian2014optimal,taylor2015statistical}, and has been extended to more complex selection algorithms~\citep{hyun2018exact,tsukurimichi2021conditional,le2021parametric,duy2022more}.
Cross-validation itself has been handled only in restricted forms: exactly when the validation score is a residual sum of squares~\citep{loftus2015cv}, or approximately by adding randomization~\citep{markovic2017unifying}.
Beyond feature selection, SI now covers clustering, change-point detection, and the outputs of deep neural networks~\citep{gao2022selective,jewell2022testing,duy2022quantifying,miwa2023valid,shiraishi2024statistical,miwa2024statistical}.
Most related to ours are recent frameworks that automate SI for composed procedures, though only at the granularity of predefined components with manually derived rules.
One framework supports feature-selection pipelines assembled from a fixed library of imputation, outlier-detection, and feature-selection modules~\citep{shiraishi2025pipelines}, and a related method addresses feature engineering~\citep{matsukawa2025afe}.
In contrast, AutoSI automates SI at the level of primitive operations, requiring no hand-derived rules and applying to any algorithm written as ordinary array code.
By representing quantities as rational functions of the data, it also handles the high-degree selection events that these component-level methods cannot.

\begin{figure*}[t]
\centering
\resizebox{\textwidth}{!}{%
\begin{tikzpicture}[
  font=\small,
  ptitle/.style={font=\small\bfseries},
  opname/.style={font=\scriptsize\ttfamily, anchor=east},
  flow/.style={-{Stealth[length=2.4mm]}, semithick, gray!60},
]
\colorlet{absC}{orange!85!black}
\colorlet{sortC}{violet!85!black}
\node[ptitle] at (2.35,3.6) {(a) Ordinary code};
\node[draw, rounded corners=2pt, inner sep=6pt, align=left, font=\scriptsize\ttfamily]
  (code) at (2.35,1.8) {%
def marginal\_screening(y):\\
\ \ corr = X.T @ y\\
\ \ idx = \textcolor{sortC}{asi.argmax}(\textcolor{absC}{asi.abs}(corr))\\
\ \ eta = (X.T @ X).inv() @ X.T\\
\ \ return eta[idx]};
\draw[flow] (5.1,1.1) -- (5.85,1.1)
  node[midway, above=2pt, font=\scriptsize, text=black] {run};
\begin{scope}[shift={(6.0,0)}]
  \node[ptitle] at (2.75,3.6) {(b) Calculating intervals along $\vY(z) = \va + \vb z$};
  \fill[gray!18] (1.9,0.35) rectangle (3.6,2.95);
  \draw[dashed, gray] (1.9,0.35) -- (1.9,2.95);
  \draw[dashed, gray] (3.6,0.35) -- (3.6,2.95);
  \node[opname, absC] at (1.4,2.6) {asi.abs};
  \draw[|-|, thick, absC] (1.5,2.6) -- (3.6,2.6);
  \node[opname, sortC] at (1.4,1.9) {asi.argmax};
  \draw[|-|, thick, sortC] (1.9,1.9) -- (5.3,1.9);
  \draw[->] (0.7,0.35) -- (5.5,0.35) node[below] {\scriptsize $z$};
  \draw[line width=2.2pt, blue!55!black] (1.9,0.35) -- (3.6,0.35);
  \node[font=\scriptsize] at (2.75,0.85) {intersection};
  \draw (2.95,0.42) -- (2.95,0.28);
  \node[font=\scriptsize] at (2.95,0.1) {$z_{\mathrm{obs}}$};
  \draw[blue!55!black, thin] (2.4,0.28) -- (2.4,-0.06);
  \node[font=\scriptsize, blue!55!black] at (2.4,-0.21) {$\gM = \{3\}$};
\end{scope}
\draw[flow] (8.8,-0.5) -- (7.3,-1.35)
  node[midway, right=4pt, font=\scriptsize, text=black] {sweep};
\begin{scope}[shift={(3.25,-5.2)}]
  \node[ptitle] at (2.5,3.6) {(c) Truncated normal on $\gZ \Rightarrow \pselective$};
  \fill[blue!15] plot[domain=0.6:1.3, samples=25]
    (\x,{0.35+2.25*exp(-0.5*((\x-2.5)/0.85)^2)}) -- (1.3,0.35) -- (0.6,0.35) -- cycle;
  \fill[blue!15] plot[domain=1.9:3.6, samples=40]
    (\x,{0.35+2.25*exp(-0.5*((\x-2.5)/0.85)^2)}) -- (3.6,0.35) -- (1.9,0.35) -- cycle;
  \fill[blue!15] plot[domain=4.1:4.7, samples=25]
    (\x,{0.35+2.25*exp(-0.5*((\x-2.5)/0.85)^2)}) -- (4.7,0.35) -- (4.1,0.35) -- cycle;
  \draw[semithick] plot[domain=0.2:4.8, samples=60]
    (\x,{0.35+2.25*exp(-0.5*((\x-2.5)/0.85)^2)});
  \draw[->] (0.1,0.35) -- (5.0,0.35) node[below] {\scriptsize $z$};
  \draw[line width=2.2pt, blue!55!black] (0.6,0.35) -- (1.3,0.35);
  \draw[line width=2.2pt, blue!55!black] (1.9,0.35) -- (3.6,0.35);
  \draw[line width=2.2pt, blue!55!black] (4.1,0.35) -- (4.7,0.35);
  \draw[blue!55!black, thin] (0.95,0.28) -- (2.18,-0.40);
  \draw[blue!55!black, thin] (2.3,0.28) -- (2.3,-0.40);
  \draw[blue!55!black, thin] (4.4,0.28) -- (2.42,-0.40);
  \node[font=\scriptsize, blue!55!black] at (2.3,-0.55) {$\gM = \{3\}$};
  \draw[gray, thin] (1.35,2.82) -- (1.6,0.5);
  \node[font=\scriptsize, gray] at (1.35,2.95) {$\gM = \{5\}$};
  \draw[gray, thin] (3.9,2.82) -- (3.85,0.5);
  \node[font=\scriptsize, gray] at (3.9,2.95) {$\gM = \{4\}$};
  \draw (2.95,0.42) -- (2.95,0.28);
  \node[font=\scriptsize] at (2.95,0.1) {$z_{\mathrm{obs}}$};
\end{scope}
\end{tikzpicture}%
}
\caption{%
Our framework, AutoSI, automatically derives the selection event of a user-written algorithm.
Given a short piece of ordinary code (a), AutoSI runs it along a one-dimensional line of datasets containing the observed one (the line is constructed in Section~\ref{sec:problem}) and, for every comparison and selection, records the interval of the line parameter $z$ on which that operation's outcome is unchanged (b); colors link each operation to its interval, and on the intersection the algorithm provably selects the same feature set $\gM$.
Sweeping the line and collecting the intervals whose selection matches the observed one yields the truncation region $\gZ$, from which a valid $p$-value (the selective $p$-value) follows via the truncated normal distribution (c).
The user writes only ordinary code.
}
\label{fig:concept}
\end{figure*}

\paragraph{Contributions}
Our contributions are summarized as follows.\footnote{Our contribution is \emph{not} a new feature-selection method, but a framework that automates and generalizes SI; the three methods we study serve as proofs of concept.}
\begin{itemize}
  \item We introduce AutoSI, the first framework to derive an algorithm's selection event automatically at the level of primitive operations, eliminating the manual, per-algorithm derivation that SI has traditionally required.
  \item We generalize the class of tractable selection events from linear and quadratic to those induced by rational functions of the data. This enables valid SI for algorithms whose selection events are described by high-degree polynomial inequalities, such as model selection by a cross-validated $R^2$ score.
  \item We demonstrate AutoSI on three feature-selection methods implemented in a few dozen lines of code with no hand-derived analysis, showing on synthetic and real data that it controls the type I error rate with high power.
\end{itemize}

The implementation of the AutoSI library is available at \url{https://github.com/tkatsuoka/autosi}.

\newpage
\section{Problem Setting}
\label{sec:problem}

\subsection{Testing a selected feature}
\label{subsec:hypothesis}
We first state the general testing problem that SI addresses, and then specialize it to feature selection, the instance used throughout the paper.
In the general setting, an observed response $\vy \in \R^n$ is treated as a realization of
\begin{equation}
  \vY \sim \gN(\vmu, \mSigma),
  \label{eq:model}
\end{equation}
where $\vmu \in \R^n$ is an arbitrary unknown mean vector, with no structural restrictions imposed on it.
The covariance matrix $\mSigma \in \R^{n \times n}$ is assumed known, as is standard in exact SI~\citep{lee2016exact,fithian2014optimal}.

A selection algorithm extracts from the data a hypothesis to be tested, for example that a set of promising features is associated with the response.
Formally, an algorithm $\gA$ maps the observed response to a discrete output $\gA(\vy)$.
To assess this output statistically, we associate it with a scalar target---a single numerical quantity representing the aspect of the hypothesis to be tested.
Here the target is $\veta^\top \vmu$, a weighted combination of the entries of the unknown mean vector, where the weight vector $\veta \in \R^n$, called the \emph{contrast}, is determined by $\gA(\vy)$.
Specifically, we test
\begin{equation}
  \mathrm{H}_0 : \veta^\top \vmu = 0
  \quad \text{vs.} \quad
  \mathrm{H}_1 : \veta^\top \vmu \neq 0.
  \label{eq:hypothesis}
\end{equation}
Unlike conventional hypothesis testing, in which the hypothesis is fixed before the data are observed, the hypothesis here is data-dependent: the observed data $\vy$ determine the algorithm's output $\gA(\vy)$, which in turn determines $\veta$, and therefore which null hypothesis $\mathrm{H}_0 : \veta^\top \vmu = 0$ is tested.

Feature selection, our running instance, realizes this setting as follows.
Let $\mX \in \R^{n \times d}$ be a fixed design matrix whose columns are the $d$ candidate features, and let the algorithm return a selected set $\gM = \gA(\vy) \subseteq \{1, \dots, d\}$.
To assess a selected feature $j \in \gM$, we use its least-squares coefficient in a regression on all selected features.
The corresponding contrast is
\begin{equation}
  \veta^\top = \bm{e}_j^\top (\mX_{\gM}^\top \mX_{\gM})^{-1} \mX_{\gM}^\top,
\end{equation}
where $\mX_{\gM}$ contains the selected columns of $\mX$, and $\bm{e}_j$ selects the coordinate corresponding to feature $j$ within $\gM$.
With this definition, $\veta^\top \vy$ is the coefficient estimated from the noisy response $\vy$, whereas $\veta^\top \vmu$ is the coefficient obtained by replacing $\vy$ with its unknown mean $\vmu$.
Thus, $\mathrm{H}_0 : \veta^\top \vmu = 0$ asks whether feature $j$ has zero coefficient after accounting for the other selected features.
This target remains well defined even when $\vmu$ is not exactly linear in $\mX$.
Linear regression is used only to define the target, not as a model assumption required for validity.
Although feature selection is our running instance, AutoSI applies to any algorithm satisfying Condition~\ref{def:rational} and any output-dependent contrast $\veta$.

Testing this hypothesis as though it had been fixed in advance is invalid.
A natural test statistic is $\veta^\top \vY$, which is Gaussian for a fixed $\veta$, suggesting the naive $p$-value
\begin{equation}
  \pnaive = \PP_{\mathrm{H}_0}\!\left( |\veta^\top \vY| \geq |\veta^\top \vy| \right).
  \label{eq:naive}
\end{equation}
This quantity treats $\veta$ as fixed and ignores that it was chosen using $\vy$; because of this double dipping, $\pnaive$ does not control the type I error rate.

\subsection{Selective inference}
\label{subsec:si}
SI restores validity by conditioning on the selection~\citep{lee2016exact,fithian2014optimal,taylor2015statistical}.
It evaluates the test statistic under the distribution of $\vY$ conditional on the \emph{selection event} $\{\gA(\vY) = \gA(\vy)\}$ that the algorithm returns the observed selection.
\paragraph{Reduction to a line search problem}
To make this conditional distribution computable, SI additionally conditions on the sufficient statistic of the nuisance parameter, $\gQ(\vY) = (\mI_n - \vb \veta^\top)\, \vY$ with $\vb = \mSigma \veta / (\veta^\top \mSigma \veta) \in \R^n$.
Intuitively, this is the part of $\vY$ that carries no information about the target parameter $\veta^\top \vmu$~\citep{fithian2014optimal,lee2016exact}.
Fixing $\gQ(\vY)$ at its observed value leaves a single scalar degree of freedom, the test statistic itself.
The response $\vY$ is then confined to the one-dimensional line
\begin{equation}
  \vY(z) = \va + \vb z, \quad z \in \R,
  \label{eq:line}
\end{equation}
with offset $\va = \gQ(\vy) \in \R^n$, along which $\veta^\top \vY(z) = z$ is distributed under $\mathrm{H}_0$ as $\gN(0,\, \veta^\top \mSigma \veta)$.
This additional conditioning eliminates every unknown except the target parameter $\veta^\top \vmu$ and does not compromise the validity of the test.
We take the reduction as given, and Lemma~\ref{lem:line} in Appendix~\ref{app:proof} makes it precise.

\paragraph{The selective $p$-value}
With $\vY$ confined to the line, computing the selective $p$-value reduces to identifying the values of $z$ that reproduce the observed selection.
Writing $z = \veta^\top \vY$ for the test statistic and $z_{\mathrm{obs}} = \veta^\top \vy$ for its observed value, the selective $p$-value is defined as
\begin{equation}
  \pselective = \PP_{\mathrm{H}_0}\!\left( |z| \geq |z_{\mathrm{obs}}| \;\middle|\; z \in \gZ \right),
  \label{eq:selective}
\end{equation}
where $\gZ$ is the truncation region defined as
\begin{equation}
  \gZ = \left\{ z \in \R \;\middle|\; \gA(\va + \vb z) = \gA(\vy) \right\},
  \label{eq:truncation}
\end{equation}
which collects the points on the line that yield the same hypothesis selection.
Once $\gZ$ is known, $\pselective$ follows in closed form from the truncated normal distribution (the Gaussian law of $z$ restricted to $\gZ$) and controls the type I error rate exactly, in the following standard sense.
\begin{theorem}[Validity of the selective $p$-value; e.g., \citealp{lee2016exact,fithian2014optimal}]
\label{thm:validity}
The selective $p$-value of Eq.~\eqref{eq:selective} satisfies
\begin{equation}
  \PP_{\mathrm{H}_0}\!\left( \pselective \leq \alpha \right) = \alpha
  \quad \text{for all } \alpha \in [0, 1].
  \label{eq:validity}
\end{equation}
\end{theorem}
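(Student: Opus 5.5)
The plan is to prove the stronger conditional statement: for every fixed selection $\gM$ (hence fixed $\veta$) and every fixed value $\va$ of the nuisance statistic, $\PP_{\mathrm{H}_0}(\pselective \le \alpha \mid \gA(\vY)=\gM,\ \gQ(\vY)=\va) = \alpha$. The unconditional claim then follows by the tower property, integrating over the finitely many outputs $\gM$ and over the law of $\gQ(\vY)$. Conditioning on $\gA(\vY)=\gM$ is what allows $\veta$ to be treated as a fixed vector, even though it is chosen from the data.

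\textbf{Step 1.} Fix $\gM$, so that $\veta$, $\vb=\mSigma\veta/(\veta^\top\mSigma\veta)$, and the decomposition $\vY = \gQ(\vY) + \vb\,\veta^\top\vY$ are fixed. Since $(\veta^\top\vY,\gQ(\vY))$ is jointly Gaussian and $\mathrm{Cov}(\gQ(\vY),\veta^\top\vY) = \mSigma\veta - \vb\,\veta^\top\mSigma\veta = 0$, the statistic $Z=\veta^\top\vY$ is independent of $\gQ(\vY)$. Under $\mathrm{H}_0$ it is distributed as $\gN(0,\sigma^2)$ with $\sigma^2=\veta^\top\mSigma\veta$. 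This is the content of Lemma~\ref{lem:line}, which I would invoke directly.

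\textbf{Step 2.} On the event $\gQ(\vY)=\va$ we have $\vY=\va+\vb Z$, so $\{\gA(\vY)=\gM\}$ coincides with $\{Z\in\gZ\}$, where $\gZ$ is the set in Eq.~\eqref{eq:truncation} for this $\va$. Because $Z$ is independent of $\gQ(\vY)$, the conditional law of $Z$ given both $\gQ(\vY)=\va$ and $\gA(\vY)=\gM$ is $\gN(0,\sigma^2)$ truncated to $\gZ$. Note that $\gZ$ has positive Gaussian mass whenever we condition on it. In the rational setting it is a finite union of intervals; in general we only need it to be measurable with positive probability.

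\textbf{Step 3.} Define $G(t)=\PP(|Z|\ge t \mid Z\in\gZ)$ under this truncated law. Then $\pselective=G(|Z|)$, the survival function of $|Z|$ evaluated at its own value. The truncated law has no atoms, so the distribution of $|Z|$ conditionally is continuous. By the probability integral transform, $G(|Z|)$ is therefore $\mathrm{Unif}(0,1)$ conditionally, which gives exact equality with $\alpha$ for all $\alpha \in [0,1]$.

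The main care point is Step 2: justifying that conditioning on a measure-zero event $\{\gQ(\vY)=\va\}$ is legitimate. I would handle it by working with regular conditional distributions, which the independence in Step 1 supplies explicitly. A second care point is that $\gZ$ depends on $\vy$ only through $(\gM,\va)$, and both are held fixed in the conditioning.
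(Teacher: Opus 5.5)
Your proposal is correct and follows the paper's proof step for step. The paper likewise establishes Lemma~\ref{lem:line} through the invertible affine map $\vY \mapsto (\veta^\top \vY, \gQ(\vY))$ and the vanishing covariance, applies the probability integral transform conditionally on $\{\gA(\vY)=\gA(\vy)\} \cap \{\gQ(\vY)=\gQ(\vy)\}$, and then marginalizes first over $\gQ(\vY)$ and then over the possible selections; your handling of the measure-zero conditioning via regular conditional distributions and of $\gZ$ having positive mass (true almost surely rather than for every realization) is a bit more careful than the paper but does not change the argument.
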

The entire difficulty is thus concentrated in a single step: determining $\gZ$.
Existing SI methods require an expert to derive, separately for each algorithm, the set $\gZ$ of values of $z$ that reproduce the observed selection.
AutoSI automates this algorithm-specific derivation.

\newpage
\section{Method}
\label{sec:method}
The proposed method, implemented in a library named AutoSI, computes the truncation region $\gZ$ automatically.
It runs the user's algorithm along the line $\vY(z) = \va + \vb z$ and records, for every data-dependent operation, the range of $z$ over which that operation's outcome is unchanged.
The starting point is a change of representation.
Every array the algorithm manipulates is carried not as a single numeric vector but as an array of \emph{rational functions of $z$}.
Each intermediate quantity is thereby available as a function of $z$, not merely as its value at the observed point (Section~\ref{subsec:representation}).
This representation turns every data-dependent decision the algorithm makes (a comparison or a selection) into a sign condition on a rational function of $z$.
Solving the sign conditions identifies the interval of $z$ on which every decision, and hence the output, stays unchanged (Sections~\ref{subsec:events}--\ref{subsec:rootfinding}).
Sweeping the line interval by interval assembles the whole of $\gZ$ (Section~\ref{subsec:accumulation}).
Section~\ref{subsec:interface} then shows the single call through which the user runs the entire procedure.

\subsection{Data as rational functions of $z$}
\label{subsec:representation}
Each element of every array is represented as a rational function of the line parameter $z$,
\begin{equation}
  \text{value}(z) = \frac{\num(z)}{\den(z)},
  \label{eq:rational}
\end{equation}
stored as the coefficient vectors of the two polynomials $\num(z)$ and $\den(z)$.
The response itself is the affine function $\vY(z) = \va + \vb z$ of Eq.~\eqref{eq:line}, in which each element has a degree-one numerator and a constant denominator.
The four arithmetic operations on these arrays stay within the representation, because sums, differences, products, and quotients of rational functions are again rational.
A matrix product stays within it as well, because each of its entries is a sum of products of rational functions.
The method executes each such operation as polynomial arithmetic on the coefficient vectors, imposing no constraint on $z$.
Each operation thus updates the stored coefficient vectors, and their degrees grow along the way: multiplication, for example, adds the degrees of its operands (Appendix~\ref{app:operations} details this degree growth).

\subsection{Selection events as polynomial inequalities}
\label{subsec:events}
Unlike the arithmetic, a comparison or a selection produces a discrete outcome, and that outcome depends on where $z$ lies.
These operations, the algorithm's data-dependent branches, are what carve the line into pieces.
A comparison between two computed quantities, $g(z) > h(z)$, holds exactly when the rational function $g(z) - h(z) = \num(z)/\den(z)$ is positive.
Each comparison thus reduces to a sign condition on a single rational function.
Selections reduce to conjunctions of such comparisons.
An $\argmax$ that selects element $k$ from $\{x_i(z)\}$, for instance, imposes $x_k(z) - x_i(z) > 0$ for every competing $i \neq k$.
Absolute-value and sorting operations decompose in the same way (Appendix~\ref{app:operations}).

One run of the algorithm therefore records finitely many such sign conditions, which we write as $\num_\ell(z)/\den_\ell(z) > 0$ for $\ell = 1, \dots, L$.
The sign of each $\num_\ell$ is chosen so that the condition holds at the point of the run.
For the run at the observed $z_{\mathrm{obs}}$, together they certify
\begin{equation}
  \bigcap_{\ell=1}^{L} \left\{ z \in \R \;\middle|\; \frac{\num_\ell(z)}{\den_\ell(z)} > 0 \right\}
  \subseteq \gZ,
  \label{eq:conjunction}
\end{equation}
that is, a system of polynomial inequalities in $z$.
The inclusion in the truncation region $\gZ$ of Eq.~\eqref{eq:truncation} holds because identical branch outcomes force identical output.
The whole of $\gZ$ is recovered exactly as a union of such regions in Section~\ref{subsec:accumulation}.
The degree of these polynomial inequalities is determined by the algorithm's complexity: it is one for the linear comparisons of marginal screening, but it reaches $4K$ for the lasso tuned by a $K$-fold cross-validated $R^2$ score (Section~\ref{subsec:examples}).
We now state the condition that determines the class of algorithms to which AutoSI applies.
Informally, an algorithm in this class must express all numerical computations through rational arithmetic and make every data-dependent decision by comparing the quantities it computes.
We refer to such algorithms as \emph{rationally expressible}, as formalized below.
\begin{condition}[Rationally expressible algorithm]
\label{def:rational}
An algorithm is rationally expressible if
(i) it is deterministic;
(ii) it performs numerical computations only through the arithmetic operations of Section~\ref{subsec:representation};
(iii) it makes every data-dependent decision---including branching, element selection, and loop termination---through the comparisons and selections described above; and
(iv) it terminates for every input.
\end{condition}
The condition unifies algorithms whose selection events have so far been characterized separately, typically one algorithm and one work at a time (Table~\ref{tab:coverage}).
Its rows differ only in the degree of the polynomial inequalities their operations generate.
The last row (CV by $R^2$ ($K$-fold)), whose degree exceeds two, is handled by no existing method.

\begin{table}[t]
\centering
\small
\begin{tabular}{@{}lcl@{}}
\toprule
Algorithm & Degree & Hand-derived SI \\
\midrule
Marginal screening$^\dagger$      & $1$     & \citet{lee2014exact} \\
Forward stepwise selection        & $1$--$2$ & \citet{tibshirani2016exact} \\
Lasso (fixed $\lambda$)$^\dagger$ & $1$     & \citet{lee2016exact} \\
$k$-means clustering              & $2$     & \citet{chen2023kmeans} \\
CV by squared error               & $2$     & \citet{loftus2015cv} \\
CV by $R^2$ ($K$-fold)$^\dagger$  & $4K$    & --- \\
\bottomrule
\end{tabular}
\caption{%
Selection algorithms covered by Condition~\ref{def:rational}, the degree of the polynomial inequalities constituting their selection events, and the work that previously derived each event by hand ($\dagger$: demonstrated in Section~\ref{sec:experiments}).
Our experiments use $K = 3$ folds, so the last row has degree twelve.
}
\label{tab:coverage}
\end{table}

\subsection{Solving the inequalities by root-finding}
\label{subsec:rootfinding}
The set of $z$ satisfying a single sign condition follows from the roots of its two polynomials.
Consider a run of the algorithm at any point $z \in \R$ and one of its recorded conditions, $\num(z)/\den(z) > 0$.
The condition holds at the run point by the choice of sign in Section~\ref{subsec:events}.
Because a polynomial can change sign only at its real roots, the condition keeps holding until the first root of the numerator or the denominator on either side of $z$.
All conditions from the run therefore hold simultaneously up to the first recorded root on either side of $z$.
Let $\mathcal{R}(z)$ denote the set of real roots of the polynomials $\num_\ell$ and $\den_\ell$ recorded during the run at $z$.
The run then certifies the open interval
\begin{equation}
  \big( I^{-}(z),\, I^{+}(z) \big)
  := \left( \max_{\substack{r \in \mathcal{R}(z) \\ r < z}} r, \;\;\, \min_{\substack{r \in \mathcal{R}(z) \\ r > z}} r \right),
  \label{eq:isosign}
\end{equation}
where the maximum (minimum) is taken as $-\infty$ ($+\infty$) when no such root exists.

The method computes the real roots numerically as the real eigenvalues of the companion matrix.
Root-finding works at any degree, so the degree-$4K$ inequalities of a cross-validated $R^2$ score are no harder to characterize than the linear ones of marginal screening.
The restriction to linear or quadratic cases disappears.

\subsection{Assembling the truncation region}
\label{subsec:accumulation}
\begin{algorithm}[t]
\caption{Automatic computation of the selective $p$-value}
\label{alg:autosi}
\begin{algorithmic}[1]
\REQUIRE $\vy$, $\gA$, $\veta$, $\mSigma$ (Section~\ref{sec:problem})
\STATE compute $\va$ and $\vb$ of Eq.~\eqref{eq:line}
\STATE $\gZ \leftarrow \emptyset$; \quad initialize $z$ to a sufficiently small value
\WHILE{$z$ is not large enough}
\STATE run $\gA$ on $\vY(z) = \va + \vb z$ as rational functions of $z$, deciding each branch at the current $z$
\STATE compute $\big( I^{-}(z),\, I^{+}(z) \big)$ by Eq.~\eqref{eq:isosign}
\IF{$\gA(\va + \vb z) = \gA(\vy)$}
\STATE $\gZ \leftarrow \gZ \cup \big( I^{-}(z),\, I^{+}(z) \big)$
\ENDIF
\STATE $z \leftarrow I^{+}(z) + \gamma$, where $0 < \gamma \ll 1$
\ENDWHILE
\RETURN $\pselective$ of Eq.~\eqref{eq:selective} over the truncation region $\gZ$
\end{algorithmic}
\end{algorithm}
Running the algorithm once at $z_{\mathrm{obs}}$ certifies the interval $\big( I^{-}(z_{\mathrm{obs}}),\, I^{+}(z_{\mathrm{obs}}) \big)$ of Eq.~\eqref{eq:isosign}, which contains $z_{\mathrm{obs}}$ and on which the algorithm returns the observed output.
A single run, however, certifies only the interval containing the point at which it was run.
The method therefore restarts the algorithm just past its right endpoint and repeats, sweeping the line interval by interval.
The sweep thereby recovers the truncation region as
\begin{equation}
  \gZ
  = \bigcup_{t \,:\, \gA(\va + \vb z_t) = \gA(\vy)} \big( I^{-}(z_t),\, I^{+}(z_t) \big),
  \label{eq:union}
\end{equation}
where $z_1 < z_2 < \cdots$ are the visited points.
For any rationally expressible algorithm, Eq.~\eqref{eq:union} holds up to a finite set of points, provided that the sweep uses a sufficiently small step $\gamma$.
This exploration of all intervals along the line can be conducted by a technique called \emph{parametric programming} in the SI literature~\citep{le2021parametric,duy2022more}, originally developed for convex optimization.
The selective $p$-value of Eq.~\eqref{eq:selective} is then computed from the truncated normal distribution over $\gZ$, and Algorithm~\ref{alg:autosi} collects the whole procedure.

Although each single run conditions on the entire computation path, the union keeps every $z$ whose \emph{output} coincides with the observed one, whether or not the path does.
The final inference therefore conditions on exactly the selection event and inherits none of the conservativeness of over-conditioning.

\subsection{Using AutoSI}
\label{subsec:interface}
AutoSI provides a tracked array type with an interface that mirrors NumPy operations.
The tracking described in Sections~\ref{subsec:representation}--\ref{subsec:events} is handled automatically, so the user can write the selection algorithm as ordinary NumPy-like code.
After wrapping the observed data in tracked arrays, the user runs the algorithm once on the observed response to obtain the selected contrast $\veta$.
The user then calls the inference routine with the algorithm, $\veta$, and the noise variance (Listing~\ref{lst:ms}).
Internally, the routine reruns the algorithm along the line $\vY(z) = \va + \vb z$, constructs the truncation region $\gZ$ using Algorithm~\ref{alg:autosi}, and returns the selective $p$-value.

\begin{listing}[t]
\begin{lstlisting}
import autosi as asi

# wrap the observed NumPy data
y = asi.array(y)
X = asi.array(X)
idx = 0  # which feature to test

def marginal_screening(y):
    corr = X.T @ y
    order = asi.argsort(asi.abs(corr))
    top3 = asi.sort(order[-3:])
    X_M = X.T[top3].T
    eta = (X_M.T @ X_M).inv() @ X_M.T
    return eta[idx]

eta = marginal_screening(y)

result = asi.inference(
    eta=eta, prob_vec=y, var=1.0,
    algorithm=marginal_screening)
print(result.p_value)
\end{lstlisting}
\caption{%
The complete SI for marginal screening.
}
\label{lst:ms}
\end{listing}

\newpage
\section{Experiments}
\label{sec:experiments}

\subsection{Three feature-selection methods}
\label{subsec:examples}
%
\begin{figure*}[t]
\centering
\includegraphics[height=0.06\textwidth]{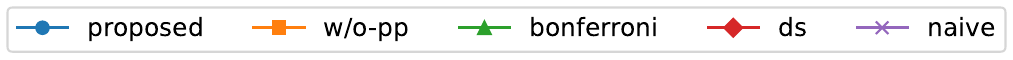}\\[3pt]
\begin{subfigure}{0.48\textwidth}
  \centering
  \includegraphics[width=\linewidth]{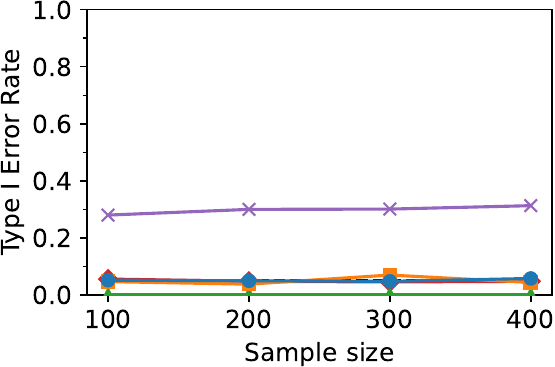}
  \caption{Marginal screening (independence)}
\end{subfigure}\hfill%
\begin{subfigure}{0.48\textwidth}
  \centering
  \includegraphics[width=\linewidth]{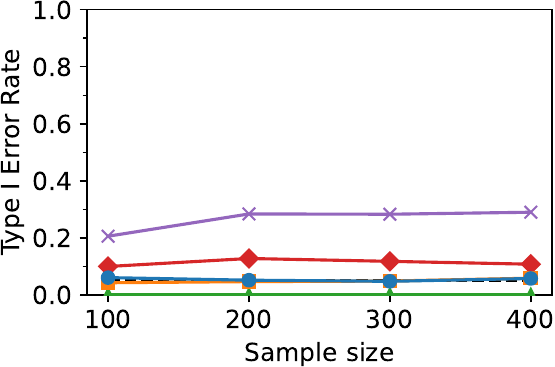}
  \caption{Marginal screening (correlation)}
\end{subfigure}\\[4pt]
\begin{subfigure}{0.48\textwidth}
  \centering
  \includegraphics[width=\linewidth]{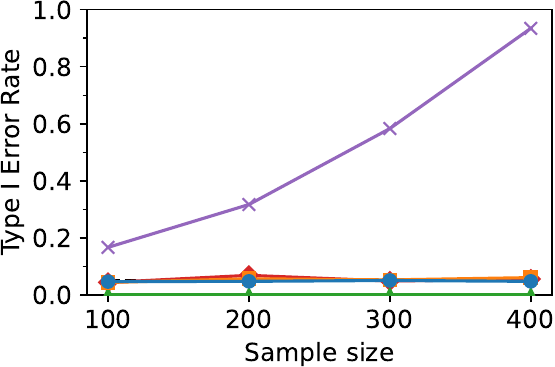}
  \caption{Lasso (independence)}
\end{subfigure}\hfill%
\begin{subfigure}{0.48\textwidth}
  \centering
  \includegraphics[width=\linewidth]{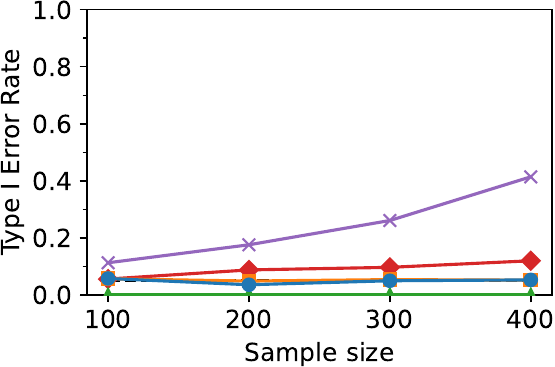}
  \caption{Lasso (correlation)}
\end{subfigure}\\[4pt]
\begin{subfigure}{0.48\textwidth}
  \centering
  \includegraphics[width=\linewidth]{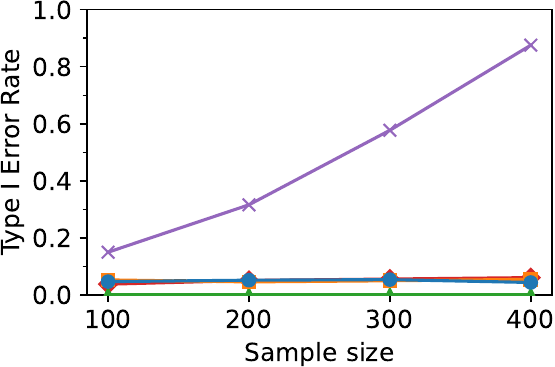}
  \caption{Cross-validated lasso (independence)}
\end{subfigure}\hfill%
\begin{subfigure}{0.48\textwidth}
  \centering
  \includegraphics[width=\linewidth]{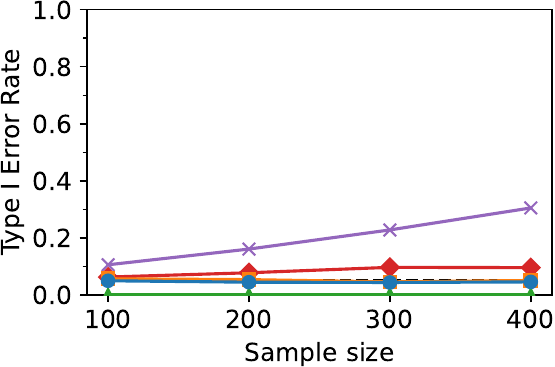}
  \caption{Cross-validated lasso (correlation)}
\end{subfigure}
\caption{%
Type I error rates on synthetic data as the sample size $n$ grows.
\texttt{proposed} and \texttt{w/o-pp} control the type I error rate at $\alpha = 0.05$ (dashed line), \texttt{ds} only under independence, \texttt{bonferroni} far below the level, and \texttt{naive} far exceeds it.
}
\label{fig:fpr}
\end{figure*}
\begin{figure*}[t]
\centering
\includegraphics[height=0.06\textwidth]{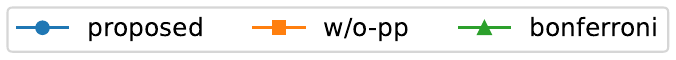}\\[3pt]
\begin{subfigure}{0.48\textwidth}
  \centering
  \includegraphics[width=\linewidth]{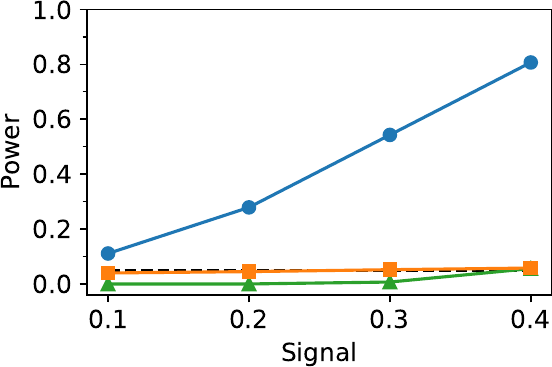}
  \caption{Marginal screening (independence)}
\end{subfigure}\hfill%
\begin{subfigure}{0.48\textwidth}
  \centering
  \includegraphics[width=\linewidth]{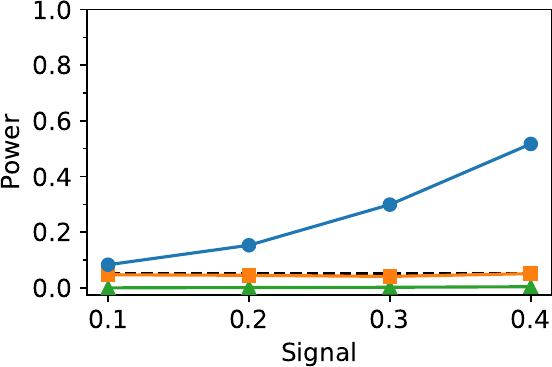}
  \caption{Marginal screening (correlation)}
\end{subfigure}\\[4pt]
\begin{subfigure}{0.48\textwidth}
  \centering
  \includegraphics[width=\linewidth]{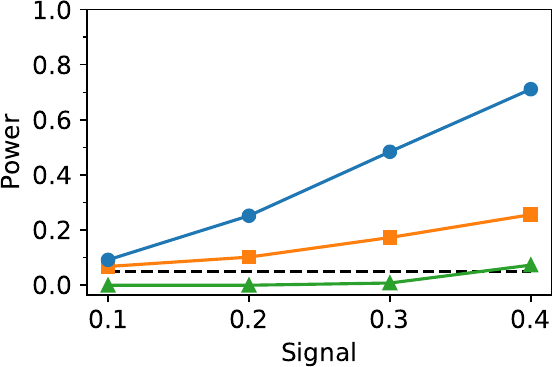}
  \caption{Lasso (independence)}
\end{subfigure}\hfill%
\begin{subfigure}{0.48\textwidth}
  \centering
  \includegraphics[width=\linewidth]{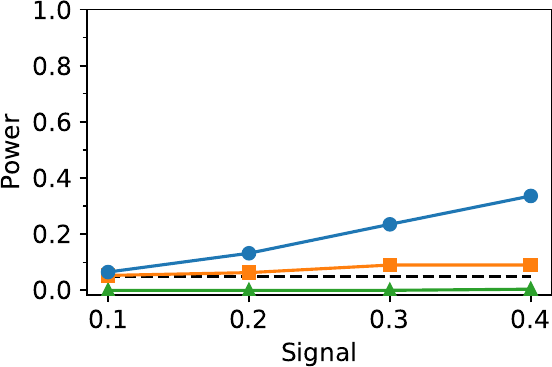}
  \caption{Lasso (correlation)}
\end{subfigure}\\[4pt]
\begin{subfigure}{0.48\textwidth}
  \centering
  \includegraphics[width=\linewidth]{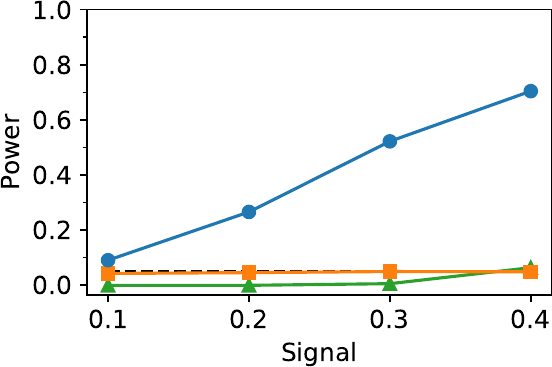}
  \caption{Cross-validated lasso (independence)}
\end{subfigure}\hfill%
\begin{subfigure}{0.48\textwidth}
  \centering
  \includegraphics[width=\linewidth]{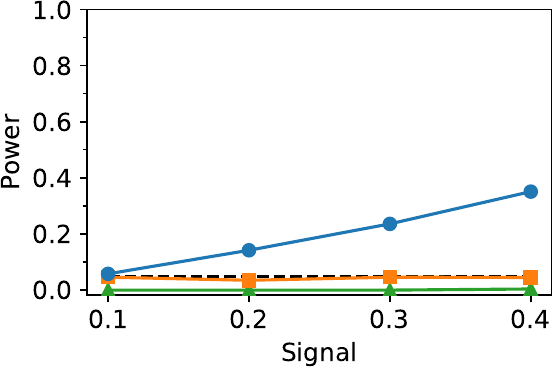}
  \caption{Cross-validated lasso (correlation)}
\end{subfigure}
\caption{%
Power on synthetic data as the signal strength $\Delta$ grows; \texttt{naive} and \texttt{ds} are excluded because they do not control the type I error rate.
Among the methods that control the type I error rate, \texttt{proposed} attains the highest power in every setting.
}
\label{fig:power}
\end{figure*}
We demonstrate the framework on three feature-selection methods whose selection events range from linear to degree twelve.
Marginal screening (Listing~\ref{lst:ms}) compares correlations, and the lasso adds an iterative fit whose updates involve only comparisons and linear arithmetic, so both selection events stay at degree one.
The cross-validated lasso instead selects its regularization strength by comparing $K$-fold $R^2$ scores across candidate values, and because each score is a sum of ratios, the comparisons yield polynomial inequalities of degree $4K$, as derived in Appendix~\ref{app:operations}.
The root-finding of Section~\ref{subsec:rootfinding} resolves them with no change to the user's code (see Appendix~\ref{app:cvlisting}).

\subsection{Experimental setup}
\label{subsec:setup}
Each method selects a feature set (the $k = 3$ most correlated ones for marginal screening, those with nonzero coefficients for the lasso variants), and we compare five ways of testing one feature drawn from the set uniformly at random, independently of the data (precise definitions in Appendix~\ref{app:comparison}).
\begin{itemize}
  \item \texttt{proposed}: the selective $p$-value of Eq.~\eqref{eq:selective} computed by Algorithm~\ref{alg:autosi}.
  \item \texttt{w/o-pp}: an ablation of \texttt{proposed} without parametric programming, taking $\gZ = ( I^{-}(z_{\mathrm{obs}}),\, I^{+}(z_{\mathrm{obs}}) )$.
  \item \texttt{naive}: the unadjusted $p$-value of Eq.~\eqref{eq:naive}.
  \item \texttt{bonferroni}: the naive $p$-value with a Bonferroni correction for multiple testing.
  \item \texttt{ds}: data splitting, which selects on one half of the data and tests on the other half with a $z$-test.
\end{itemize}
Synthetic data are generated from the linear model $\vy = \mX \vbeta + \veps$ with $d = 20$ features and noise $\veps \sim \gN(\bm{0}, \mSigma)$, so that $\vy$ follows Eq.~\eqref{eq:model} with $\vmu = \mX \vbeta$.
In the independence setting, we set $\mSigma = \mI_n$ and draw the entries of $\mX \in \R^{n \times d}$ as independent standard normals.
In the correlation setting, we set $\mSigma = \big(0.5^{|i-j|}\big)_{ij}$ and draw each column of $\mX$ from $\gN(\bm{0}, \mSigma)$, so that both the noise and the covariates are dependent across observations.
In the null setting all coefficients of $\vbeta$ are zero, whereas in the alternative setting the first three coefficients are set to a signal strength $\Delta > 0$ and the rest to zero.
%
For the type I error rate we collect $1{,}000$ null $p$-values for each $n \in \{100, 200, 300, 400\}$.
For the power we fix $n = 100$ and vary $\Delta \in \{0.1, 0.2, 0.3, 0.4\}$; because a rejection demonstrates power only when a truly active feature is tested, we report the rejection rate over $1{,}000$ trials in which the tested feature is an active one.
All tests are two-sided at significance level $\alpha = 0.05$, and the hyperparameters of the three methods (the regularization strength, the stopping tolerance, and the cross-validation grid) are listed in Appendix~\ref{app:expdetails} together with the computing environment.

\subsection{Results on synthetic data}
\label{subsec:synthetic}
Figure~\ref{fig:fpr} compares the type I error rates.
Across all three feature-selection methods and both settings, \texttt{proposed} and \texttt{w/o-pp} control the type I error rate at the significance level.
The \texttt{ds} baseline controls the type I error rate under independence, but fails to do so under correlation, because the two halves of the data are dependent (Appendix~\ref{app:comparison}).
As with \texttt{naive}, we therefore exclude \texttt{ds} from the power comparison.
The \texttt{bonferroni} correction controls the type I error rate far below the level, reflecting its conservativeness.
In contrast, \texttt{naive} exceeds the significance level by a wide margin, so its detections are not statistically reliable.
Figure~\ref{fig:power} compares the powers of the methods that control the type I error rate.
Among them, \texttt{proposed} attains the highest power in every setting.
The gap between \texttt{proposed} and \texttt{w/o-pp} reflects the power cost of over-conditioning (Section~\ref{subsec:accumulation}).

\subsection{Results on real data}
\label{subsec:realdata}
We further evaluate the framework on seven regression datasets from the UCI Machine Learning Repository~\citep{kelly2023uci}.
For each dataset we repeatedly subsample $n = 100$ observations, standardize the features and the response, and estimate the noise variance from the least-squares residuals of the subsample; the full protocol is given in Appendix~\ref{app:expdetails}.
Because the true signal in real data is unknown, the type I error rate cannot be measured directly; we therefore report the rejection rate at $\alpha = 0.05$.
We omit \texttt{naive} and \texttt{ds}, which do not control the type I error rate (Section~\ref{subsec:synthetic}).
For the remaining methods, the rejection rate measures their ability to detect real associations.
Table~\ref{tab:realdata} shows the results for the cross-validated lasso, and Table~\ref{tab:realdata-all} in Appendix~\ref{app:expdetails} reports all three feature-selection methods.
For the cross-validated lasso, \texttt{proposed} attains the highest rejection rate on every dataset.

\begin{table}[t]
\centering
\small
\setlength{\tabcolsep}{4pt}
\begin{tabular}{@{}lccc@{}}
\toprule
Dataset & \texttt{bonferroni} & \texttt{w/o-pp} & \texttt{proposed} \\
\midrule
Airfoil          & .65 & .04 & \textbf{.84} \\
Concrete         & .48 & .07 & \textbf{.56} \\
Energy (heating) & .64 & .08 & \textbf{.70} \\
Energy (cooling) & .68 & .05 & \textbf{.71} \\
Real estate      & .30 & .02 & \textbf{.50} \\
Wine (red)       & .13 & .08 & \textbf{.34} \\
Wine (white)     & .16 & .05 & \textbf{.30} \\
\bottomrule
\end{tabular}
\caption{%
Rejection rates at $\alpha = 0.05$ on seven real datasets for the cross-validated lasso.
The highest rate is shown in bold (ties broken toward \texttt{proposed}).
}
\label{tab:realdata}
\end{table}

\newpage
\section{Discussion and Conclusion}
\label{sec:conclusion}
We presented AutoSI, a framework that traces the primitive array operations of a user-written algorithm and automatically constructs the corresponding selection event.
By representing every intermediate quantity as a rational function of the data, AutoSI extends exact SI beyond linear and quadratic selection events to those of arbitrary polynomial degree.
Consequently, users need only implement a rationally expressible algorithm in ordinary NumPy-like code.
AutoSI derives its selection event and computes a valid selective $p$-value automatically.
Using the same framework, we handled marginal screening, the lasso, and a cross-validated lasso beyond the reach of existing exact methods, while controlling the type I error rate and achieving higher power than the valid baselines.

The framework applies to the rationally expressible algorithms of Condition~\ref{def:rational}, and its main cost is computation time, which grows with the selection event's degree.

The next step is to carry the framework to other previously unreachable selection algorithms.
Removing the manual derivation will let practitioners attach valid $p$-values to the algorithms they use, rather than to simplified variants.

\newpage
\subsection*{Acknowledgments}
This work was supported by RIKEN Junior Research Associate Program and partially supported by JST CREST (JPMJCR21D3, JPMJCR22N2), JST Moonshot R\&D (JPMJMS2033-05), and RIKEN Center for Advanced Intelligence Project.

\clearpage
\appendix
\newpage

\section{Proofs}
\label{app:proof}
Throughout, fix the line $\vY(z) = \va + \vb z$ of Eq.~\eqref{eq:line} and write $\gA(z)$ as shorthand for the selection $\gA(\va + \vb z)$.

\subsection{Proof of Theorem~\ref{thm:validity}}
\label{app:proof-validity}
The theorem follows from the classical reduction to a one-dimensional line stated in Section~\ref{sec:problem}, which we include for completeness.

\begin{lemma}[Reduction to a one-dimensional line; e.g., \citealp{lee2016exact,fithian2014optimal}]
\label{lem:line}
Conditional on the event $\{\gA(\vY) = \gA(\vy)\} \cap \{\gQ(\vY) = \gQ(\vy)\}$, the test statistic $\veta^\top \vY$ follows the normal distribution $\gN(\veta^\top \vmu,\, \veta^\top \mSigma \veta)$ truncated to the region $\gZ$ of Eq.~\eqref{eq:truncation}.
\end{lemma}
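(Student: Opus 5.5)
The plan is to decompose $\vY$ into two Gaussian pieces that are independent, and then to rewrite both conditioning events in terms of those pieces.

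Set $Z = \veta^\top \vY$ and $\vb = \mSigma\veta/(\veta^\top\mSigma\veta)$. By definition of $\gQ$ we have the identity
\begin{equation}
  \vY = \gQ(\vY) + \vb\, Z .
\end{equation}
This holds deterministically, since $(\mI_n - \vb\veta^\top)\vY + \vb\,\veta^\top\vY = \vY$.

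\textbf{Step 1 (independence).} The pair $(Z, \gQ(\vY))$ is a linear transformation of the Gaussian vector $\vY$, so it is jointly Gaussian. The cross-covariance is
\begin{equation}
  \mathrm{Cov}\bigl(\gQ(\vY), Z\bigr) = (\mI_n - \vb\veta^\top)\mSigma\veta = \mSigma\veta - \vb\,(\veta^\top\mSigma\veta) = \bm{0}.
\end{equation}
Hence $Z$ and $\gQ(\vY)$ are independent, and marginally $Z \sim \gN(\veta^\top\vmu, \veta^\top\mSigma\veta)$.

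\textbf{Step 2 (rewriting the event).} On $\{\gQ(\vY) = \gQ(\vy)\}$ the identity gives $\vY = \va + \vb Z$ with $\va = \gQ(\vy)$. Therefore
\begin{equation}
  \{\gA(\vY) = \gA(\vy)\} \cap \{\gQ(\vY) = \gQ(\vy)\} = \{Z \in \gZ\} \cap \{\gQ(\vY) = \va\},
\end{equation}
where $\gZ$ is the set in Eq.~\eqref{eq:truncation}. The conditioning event has thus been expressed through $Z$ and $\gQ(\vY)$ separately.

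\textbf{Step 3 (conclusion).} By Step 1, conditioning on $\gQ(\vY) = \va$ leaves the law of $Z$ unchanged. Conditioning further on $Z \in \gZ$ then simply restricts the normal law of $Z$ to $\gZ$. The conditional density is therefore proportional to $\phi\bigl((z-\veta^\top\vmu)/\sqrt{\veta^\top\mSigma\veta}\bigr)\,\mathbf{1}\{z\in\gZ\}$, which is the claimed truncated normal.

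\textbf{Main obstacle.} The delicate point is that $\{\gQ(\vY)=\gQ(\vy)\}$ has probability zero, so "conditioning" must be read as the regular conditional distribution given $\gQ(\vY)$. I would handle this by working with the joint density of $(Z, \gQ(\vY))$ on the $(n-1)$-dimensional affine support of $\gQ(\vY)$. Because of independence this density factorizes, and conditioning on the event $\{Z \in \gZ\}$, which involves $Z$ alone, is then justified provided $\PP(Z\in\gZ)>0$. That condition holds whenever $\gZ$ has positive Lebesgue measure; for rationally expressible algorithms this is the case almost surely, since $z_{\mathrm{obs}}$ lies in an open interval contained in $\gZ$.

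**Connecting to Theorem 1.** Given the lemma, Theorem~\ref{thm:validity} follows by a standard argument. Under $\mathrm{H}_0$, $\pselective$ is the two-sided tail probability of the continuous truncated normal evaluated at the observed statistic. The probability integral transform therefore makes $\pselective$ uniform conditionally on the event, and integrating over the conditioning variables gives uniformity unconditionally.
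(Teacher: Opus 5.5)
Your proof is correct and follows essentially the same route as the paper. Both use the identity $\vY = \gQ(\vY) + \vb\,\veta^\top\vY$, show that $\gQ(\vY)$ and $\veta^\top\vY$ are jointly Gaussian and uncorrelated (hence independent), rewrite the conditioning event as $\{\veta^\top\vY \in \gZ\}$ on the line $\va + \vb z$, and conclude by independence. Your remarks on regular conditional distributions given the null event $\{\gQ(\vY)=\gQ(\vy)\}$, and on $\PP(\veta^\top\vY\in\gZ)>0$, make explicit a measure-theoretic point that the paper passes over.
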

\begin{proof}
Since $\veta^\top \vb = 1$, the map $\vY \mapsto (\veta^\top \vY,\, \gQ(\vY))$ is an invertible affine transformation with inverse $\vY = \gQ(\vY) + \vb\, (\veta^\top \vY)$.
Its components are jointly Gaussian and uncorrelated,
\begin{equation}
\begin{split}
  \mathrm{cov}\!\left( \gQ(\vY),\, \veta^\top \vY \right)
  &= (\mI_n - \vb \veta^\top)\, \mSigma \veta \\
  &= \mSigma \veta - \vb\, (\veta^\top \mSigma \veta)
  = \bm{0},
\end{split}
\end{equation}
hence independent, and $\veta^\top \vY \sim \gN(\veta^\top \vmu,\, \veta^\top \mSigma \veta)$.
Writing $z = \veta^\top \vY$, the conditioning event becomes
\begin{equation}
\begin{split}
  &\big\{ \vY \bigm| \gA(\vY) = \gA(\vy),\, \gQ(\vY) = \gQ(\vy) \big\} \\
  &\quad= \big\{ \va + \vb z \bigm| \gA(\va + \vb z) = \gA(\vy),\, z \in \R \big\} \\
  &\quad= \big\{ \va + \vb z \bigm| z \in \gZ \big\},
\end{split}
\end{equation}
where the first equality holds because $\gQ(\vY) = \gQ(\vy)$ forces $\vY = \gQ(\vy) + \vb z = \va + \vb z$, and the second is the definition of $\gZ$ in Eq.~\eqref{eq:truncation}.
By independence, conditioning on $\gQ(\vY)$ leaves the law of $z$ unchanged, and the restriction to $\{z \in \gZ\}$ truncates it.
\end{proof}

\begin{proof}[Proof of Theorem~\ref{thm:validity}]
By Lemma~\ref{lem:line} and the probability integral transform, under $\mathrm{H}_0$,
\begin{equation}
  \PP_{\mathrm{H}_0}\!\left( \pselective \leq \alpha \;\middle|\; \gA(\vY) = \gA(\vy),\, \gQ(\vY) = \gQ(\vy) \right)
  = \alpha
\end{equation}
for every $\alpha \in [0, 1]$ and every realization $\vy$.
Marginalizing over $\gQ(\vY)$ gives
\begin{equation}
\begin{split}
  &\PP_{\mathrm{H}_0}\!\left( \pselective \leq \alpha \mid \gA(\vY) = \gA(\vy) \right) \\
  &= \int_{\R^n}
     \PP_{\mathrm{H}_0}\!\left( \pselective \leq \alpha \mid \gA(\vY) = \gA(\vy),\, \gQ(\vY) = \bm{q} \right) \\
  &\hspace{3.4em}
     \PP_{\mathrm{H}_0}\!\left( \gQ(\vY) = \bm{q} \mid \gA(\vY) = \gA(\vy) \right) d\bm{q}
   = \alpha,
\end{split}
\end{equation}
and marginalizing over the selection gives
\begin{equation}
\begin{split}
  &\PP_{\mathrm{H}_0}\!\left( \pselective \leq \alpha \right) \\
  &= \sum_{\gA(\vy)}
     \PP_{\mathrm{H}_0}\!\left( \gA(\vy) \right)
     \PP_{\mathrm{H}_0}\!\left( \pselective \leq \alpha \mid \gA(\vY) = \gA(\vy) \right) \\
  &= \sum_{\gA(\vy)}
     \PP_{\mathrm{H}_0}\!\left( \gA(\vy) \right) \alpha
   = \alpha,
\end{split}
\end{equation}
where the sum ranges over all possible values of $\gA(\vy)$ and $\PP_{\mathrm{H}_0}(\gA(\vy))$ is shorthand for $\PP_{\mathrm{H}_0}(\gA(\vY) = \gA(\vy))$; this is Eq.~\eqref{eq:validity}.
\end{proof}
The name of the conditioning variable reflects the decomposition $\vmu = (\veta^\top \vmu)\, \vb + (\mI_n - \vb \veta^\top)\, \vmu$, whose second term is the nuisance parameter: everything about $\vmu$ that the hypotheses of Eq.~\eqref{eq:hypothesis} do not constrain.
The law of $\gQ(\vY)$ depends only on this nuisance component, whereas by Lemma~\ref{lem:line} the conditional law of the test statistic is free of it; conditioning on sufficient statistics of nuisance parameters is the standard device for obtaining such nuisance-free conditional tests~\citep{fithian2014optimal}.

\section{Per-Operation Selection Events}
\label{app:operations}
This appendix records the sign conditions each tracked operation of Table~\ref{tab:ops} imposes; the real roots of their numerators and denominators constitute the root set $\mathcal{R}(z)$ of Eq.~\eqref{eq:isosign} for a run at $z$.

\begin{table}[t]
\centering
\small
\begin{tabular}{@{}llc@{}}
\toprule
Group & Examples & Tracked? \\
\midrule
Arithmetic & $+\;\,-\;\,\times\;\,/$,\; \texttt{@},\; \texttt{inv} & --- \\
Comparison & $<\;\,>\;\,\leq\;\,\geq$ & \checkmark \\
Selection & \texttt{abs},\, \texttt{max},\, \texttt{argmax},\, \texttt{argsort} & \checkmark \\
Equality & \texttt{==},\; \texttt{!=} & --- \\
\bottomrule
\end{tabular}
\caption{%
Operations supported by the array type: each comparison and selection contributes sign conditions (``\checkmark''), whereas arithmetic and equality tests are untracked (``---'').
}
\label{tab:ops}
\end{table}

\paragraph{Comparisons}
$g(z) > h(z)$ holds exactly when $g - h = \num/\den$ is positive, that is, when $\num$ and $\den$ share their sign; the condition contributes the real roots of both polynomials to $\mathcal{R}(z)$.
The operators $<$, $\leq$, and $\geq$ are analogous; ties occur only at roots, a null set.

\paragraph{Absolute value}
$\abs{x}$ returns $x$ or $-x$ according to the sign of $x$, so each element contributes the single condition $x > 0$ or $x < 0$.

\paragraph{Maximum and $\argmax$}
Selecting element $k$ from $\{x_i\}_{i=1}^{m}$ imposes the $m - 1$ conditions $x_k - x_i > 0$ for $i \neq k$; minima reverse the sign.

\paragraph{Sorting and $\argsort$}
A sort that returns the order $x_{\pi(1)} \geq \dots \geq x_{\pi(m)}$ imposes the $m - 1$ consecutive-rank conditions $x_{\pi(j)} - x_{\pi(j+1)} \geq 0$; ties again occur only on a null set.

\paragraph{Equality}
\texttt{==} and \texttt{!=} are left untracked: two rational functions that are not identical agree at finitely many $z$ only, so omitting the condition alters $\gZ$ at most on a null set (which, as in the proof above, leaves $\pselective$ unchanged).
Identical functions, in turn, agree everywhere and impose no constraint.

\paragraph{Degree growth}
Multiplying two ratios adds their degrees, and so does adding ratios whose denominators differ, since $\frac{p}{q} + \frac{u}{v} = \frac{pv + uq}{qv}$; division raises no degree by itself (it merely swaps numerator and denominator) but creates the unequal denominators that make later additions compound.
A pipeline of linear steps therefore stays at degree one.
For the cross-validated $R^2$ score of Section~\ref{subsec:examples}, each fold contributes the ratio of the residual and total sums of squares, both quadratic in $z$, so a $K$-fold score (a sum of $K$ such ratios) is rational of degree $2K$ over $2K$.
Comparing two candidates' scores clears both denominators, which are positive, and leaves a polynomial inequality of degree $4K$.

\section{Details of the Comparison Methods}
\label{app:comparison}
This appendix defines the four baselines of Section~\ref{subsec:setup}.

\begin{itemize}
  \item \texttt{w/o-pp}: an ablation of \texttt{proposed} that runs the user's algorithm once at the observed point $z_{\mathrm{obs}}$ and replaces the truncation region $\gZ$ in Eq.~\eqref{eq:selective} by the single interval $\big( I^{-}(z_{\mathrm{obs}}),\, I^{+}(z_{\mathrm{obs}}) \big)$ of Eq.~\eqref{eq:isosign}, skipping the parametric-programming sweep of Algorithm~\ref{alg:autosi}:
  \begin{equation}
    p_{\text{w/o-pp}}
    = \PP_{\mathrm{H}_0}\!\left( |z| \geq |z_{\mathrm{obs}}| \,\middle|\, z \in \big( I^{-}(z_{\mathrm{obs}}), I^{+}(z_{\mathrm{obs}}) \big) \right).
  \end{equation}
  Because the interval conditions on the algorithm's entire computation path rather than on its output alone, the inference remains valid but is over-conditioned, which costs power~\citep{le2021parametric,duy2022more}.

  \item \texttt{naive}: the unadjusted $p$-value $\pnaive$ of Eq.~\eqref{eq:naive}, which treats the contrast $\veta$ as if it had been fixed before seeing the data and evaluates the test statistic under the unconditional normal law of $\veta^\top \vY$. It ignores the selection and does not control the type I error rate.

  \item \texttt{bonferroni}: a Bonferroni correction of $\pnaive$ over every selection the algorithm could have made; bounding the number of possible selections by the number of feature subsets, $2^d$, the corrected $p$-value is
  \begin{equation}
    p_{\mathrm{bonferroni}} = \min\left(1,\; 2^d \cdot \pnaive\right).
  \end{equation}
  The correction is valid but extremely conservative, because the algorithm could in fact have produced only a small fraction of these selections.

  \item \texttt{ds}: data splitting, which divides the $n$ observations into two interleaved halves by taking every other observation, runs the same selection algorithm on one half, and tests the drawn feature's coefficient in the regression on the selected features, computed on the other half, with a classical two-sided $z$-test~\citep{fithian2014optimal}.
  Under the independence setting the two halves are independent, so the test is valid; in the correlation setting the halves are dependent, and the test does not control the type I error rate.
  Data splitting pays for its simplicity twice: the selection sees only half of the data, so it may differ from the selection on the full data, and the test statistic uses only the remaining half, which reduces power.
\end{itemize}

\section{Details of the Experiments}
\label{app:expdetails}

\paragraph{Hyperparameters}
The lasso minimizes $\frac{1}{2n} \norm{\vy - \mX \vbeta}_2^2 + \lambda \norm{\vbeta}_1$ with $\lambda = 0.1$, and its iterative fit stops when the coefficients change by less than $10^{-4}$ in $\ell_1$ norm or after $1{,}000$ passes over the features.
The cross-validated lasso selects $\lambda$ among three log-spaced candidates on $[10^{-2}, 10^{-1}]$ by the average validation $R^2$ over $K = 3$ folds, drawn as a random partition of the observations that is fixed within each trial.
All hyperparameter values were fixed a priori and not tuned.
In all three methods the tested contrast $\veta$ is the row of $(\mX_{\gM}^\top \mX_{\gM})^{-1} \mX_{\gM}^\top$ corresponding to the drawn feature, so the test statistic $\veta^\top \vy$ is that feature's least-squares coefficient in the regression on the selected features.
The draw is a uniform random choice from the selected set, fixed independently of the response within each trial.
Trials in which the lasso variants select no feature are discarded.
All experiments were executed on an AMD EPYC 9474F CPU ($48$ cores, $3.6$\,GHz) with $768$\,GB of memory, running Ubuntu 24.04 with Python 3.11.
Every trial uses an explicit random seed, so all reported numbers are reproducible.
Each reported rate on synthetic data averages $1{,}000$ binary outcomes and each rate on real data averages $100$, so their standard errors are at most $\sqrt{0.25/1000} \approx 0.016$ and $\sqrt{0.25/100} = 0.05$, respectively.

\paragraph{Real-data protocol}
The seven datasets of Table~\ref{tab:realdata} are all regression tasks from the UCI Machine Learning Repository~\citep{kelly2023uci}, chosen as standard public benchmarks.
For each of the $100$ repetitions per dataset we subsample $n = 100$ observations without replacement and standardize each feature column (dropping columns that are constant within the subsample) and the response.
The noise variance is estimated by $\hat\sigma^2 = \norm{\vy - \mX \hat\vbeta}_2^2 / (n - d - 1)$ from the least-squares fit on the subsample.
The inference is then exact up to this plug-in estimate of the variance.
Because the lasso variants carry no intercept, the algorithm first centers the response; the centering is a linear operation on the data, so it is tracked like any other operation and the inference remains valid.
Table~\ref{tab:realdata-all} reports the rejection rates of Section~\ref{subsec:realdata} for all three feature-selection methods.

\begin{table}[t]
\centering
\small
\setlength{\tabcolsep}{4pt}
\begin{tabular}{@{}lccc@{}}
\toprule
Dataset & \texttt{bonferroni} & \texttt{w/o-pp} & \texttt{proposed} \\
\midrule
\multicolumn{4}{@{}l}{\emph{Marginal screening}} \\
Airfoil          & \textbf{.83} & .54 & .75 \\
Concrete         & .75 & .35 & \textbf{.75} \\
Energy (heating) & .78 & .85 & \textbf{.91} \\
Energy (cooling) & .66 & .78 & \textbf{.80} \\
Real estate      & .49 & .46 & \textbf{.67} \\
Wine (red)       & .27 & .19 & \textbf{.54} \\
Wine (white)     & .07 & .08 & \textbf{.47} \\
\addlinespace
\multicolumn{4}{@{}l}{\emph{Lasso}} \\
Airfoil          & .79 & .38 & \textbf{.82} \\
Concrete         & .64 & .31 & \textbf{.75} \\
Energy (heating) & .94 & .63 & \textbf{.94} \\
Energy (cooling) & .97 & .54 & \textbf{.97} \\
Real estate      & .40 & .22 & \textbf{.63} \\
Wine (red)       & .17 & .14 & \textbf{.38} \\
Wine (white)     & .21 & .07 & \textbf{.37} \\
\addlinespace
\multicolumn{4}{@{}l}{\emph{Cross-validated lasso}} \\
Airfoil          & .65 & .04 & \textbf{.84} \\
Concrete         & .48 & .07 & \textbf{.56} \\
Energy (heating) & .64 & .08 & \textbf{.70} \\
Energy (cooling) & .68 & .05 & \textbf{.71} \\
Real estate      & .30 & .02 & \textbf{.50} \\
Wine (red)       & .13 & .08 & \textbf{.34} \\
Wine (white)     & .16 & .05 & \textbf{.30} \\
\bottomrule
\end{tabular}
\caption{%
Rejection rates at $\alpha = 0.05$ on seven real datasets for all three feature-selection methods, where a higher rate means more detected associations.
For each method, the highest rate is shown in bold (ties broken toward \texttt{proposed}).
}
\label{tab:realdata-all}
\end{table}

\clearpage
\onecolumn
\section{Complete Code of the Lasso and Cross-Validated Lasso}
\label{app:cvlisting}
Listing~\ref{lst:cv} shows the full user-side code of the cross-validated lasso of Section~\ref{subsec:examples}, comprising the iterative lasso fit and the $K$-fold cross-validation over the candidate grid.
Although this algorithm's selection event, a system of polynomial inequalities up to degree twelve, is far beyond what has been derived by hand, the code contains no inference-specific logic beyond the final \texttt{inference} call.
Listing~\ref{lst:lasso} shows the corresponding self-contained code of the lasso at the fixed regularization strength $\lambda$.

\begin{listing}[H]
\begin{lstlisting}
import numpy as np
import autosi as asi

# X: design matrix wrapped by asi.array
# folds: list of K index arrays
# grid: candidate regularization strengths
# n, d: numbers of observations and features
# max_iter, tol: stopping rule of the fit
# idx: tested position in the selected set
#      (random draw in the experiments)
def fit_lasso(y, X, n, lam):
    msq = asi.sum(X ** 2, axis=0) / n
    beta = asi.array(np.zeros(d))
    for _ in range(max_iter):
        old = beta.copy()
        for j in range(d):
            r = y - X @ beta + X[:, j] * beta[j]
            t = X[:, j] @ r / n
            if t > lam:
                beta[j] = (t - lam) / msq[j]
            elif t < -lam:
                beta[j] = (t + lam) / msq[j]
            else:
                beta[j] = asi.array(0.0)
        if asi.sum(asi.abs(beta - old)) < tol:
            break
    return beta

def cv_r2_lasso(y):
    scores = []
    for lam in grid:
        r2 = asi.array(0.0)
        for k in range(K):
            tr = np.concatenate([folds[j] for j in range(K) if j != k])
            va = folds[k]
            beta = fit_lasso(y[tr], X[tr], len(tr), lam)
            res = y[va] - X[va] @ beta
            ss_res = asi.sum(res ** 2)
            mean = asi.sum(y[va]) / len(va)
            ss_tot = asi.sum((y[va] - mean) ** 2)
            r2 = r2 + 1.0 - ss_res / ss_tot
        scores.append(r2 / K)
    best = asi.argmax(asi.stack(scores))
    beta = fit_lasso(y, X, n, grid[best])
    X_M = X.T[beta != 0]
    eta = (X_M @ X_M.T).inv() @ X_M
    return eta[idx]

eta = cv_r2_lasso(y)
result = asi.inference(eta=eta, prob_vec=y, var=1.0, algorithm=cv_r2_lasso)
\end{lstlisting}
\caption{%
The complete cross-validated lasso.
The code reads as an ordinary implementation of the lasso fit and $K$-fold cross-validation; its degree-twelve selection event is derived automatically when \texttt{inference} runs it along the line.
}
\label{lst:cv}
\end{listing}

\begin{listing}[H]
\begin{lstlisting}
import numpy as np
import autosi as asi

# X: design matrix wrapped by asi.array
# n, d: numbers of observations and features
# lam: regularization strength
# max_iter, tol: stopping rule of the fit
# idx: tested position in the selected set
#      (random draw in the experiments)
def lasso(y):
    msq = asi.sum(X ** 2, axis=0) / n
    beta = asi.array(np.zeros(d))
    for _ in range(max_iter):
        old = beta.copy()
        for j in range(d):
            r = y - X @ beta + X[:, j] * beta[j]
            t = X[:, j] @ r / n
            if t > lam:
                beta[j] = (t - lam) / msq[j]
            elif t < -lam:
                beta[j] = (t + lam) / msq[j]
            else:
                beta[j] = asi.array(0.0)
        if asi.sum(asi.abs(beta - old)) < tol:
            break
    X_M = X.T[beta != 0]
    eta = (X_M @ X_M.T).inv() @ X_M
    return eta[idx]

eta = lasso(y)
result = asi.inference(eta=eta, prob_vec=y, var=1.0, algorithm=lasso)
\end{lstlisting}
\caption{%
The complete lasso at a fixed regularization strength $\lambda$.
}
\label{lst:lasso}
\end{listing}


\newpage
\bibliographystyle{plainnat}
\bibliography{ref}

\end{document}